\newcites{New}{References}
\newcommand\ml[1]{{\color{black}#1}}
\newcommand\nj[1]{{\color{black}#1}}
\newcommand\sh[1]{{\color{black}#1}}
\theoremstyle{plain}
\newtheorem{lem}{Lemma}
\theoremstyle{definition}
\newtheorem{defn}{Definition}
\theoremstyle{remark}
\DeclareRobustCommand\onedot{\futurelet\@let@token\@onedot}
\def\@onedot{\ifx\@let@token.\else.\null\fi\xspace}
\def\eg{\emph{e.g}\onedot} 
\def\ie{\emph{i.e}\onedot}
\def\wrt{w.r.t\onedot} 
\def\etal{\emph{et al}\onedot}
\newcommand{\printfnsymbol}[1]{%
  \textsuperscript{\@fnsymbol{#1}}%
}
\DeclareMathOperator*{\argmin}{argmin}
\begin{document}
\pagestyle{headings}
\mainmatter
\def\ECCVSubNumber{6446}  

\title{Procrustean Regression Networks: Learning 3D Structure of Non-Rigid Objects from 2D Annotations} 

\titlerunning{Procrustean Regression Networks}
%
\author{Sungheon Park\thanks{Authors contributed equally. $^{\dagger}$ Corresponding author.}\inst{1}\orcidID{0000-0002-7287-5661} \and
Minsik Lee\printfnsymbol{1}\inst{2}\orcidID{0000-0003-4941-4311} \and
Nojun Kwak$^{\dagger}$\inst{3}\orcidID{0000-0002-1792-0327}}

\authorrunning{S. Park et al.}
%
\institute{Samsung Advanced Institute of Technology (SAIT), Korea \email{sungheonpark@snu.ac.kr} \and
Hanyang University, Korea
\email{mleepaper@hanyang.ac.kr} \and
Seoul National University, Korea
\email{nojunk@snu.ac.kr}}
\maketitle

\begin{abstract}
 We propose a novel framework for training neural networks which is capable of learning 3D information of non-rigid objects when only 2D annotations are available as ground truths. Recently, there have been some approaches that incorporate the problem setting of non-rigid structure-from-motion (NRSfM) into deep learning to learn 3D structure reconstruction. The most important difficulty of NRSfM is to estimate both the rotation and deformation at the same time, and previous works handle this by regressing both of them. In this paper, we resolve this difficulty by proposing a loss function wherein the suitable rotation is automatically determined. Trained with the cost function consisting of the reprojection error and the low-rank term of aligned shapes, the network learns the 3D structures of such objects as human skeletons and faces during the training, whereas the testing is done in a single-frame basis. The proposed method can handle inputs with missing entries and experimental results validate that the proposed framework shows superior reconstruction performance to the state-of-the-art method on the Human 3.6M, 300-VW, and SURREAL datasets, even though the underlying network structure is very simple.
\end{abstract}

\section{Introduction}

\begin{figure*}[t]
    \centering
    \hfill
    \includegraphics[width=0.98\textwidth]{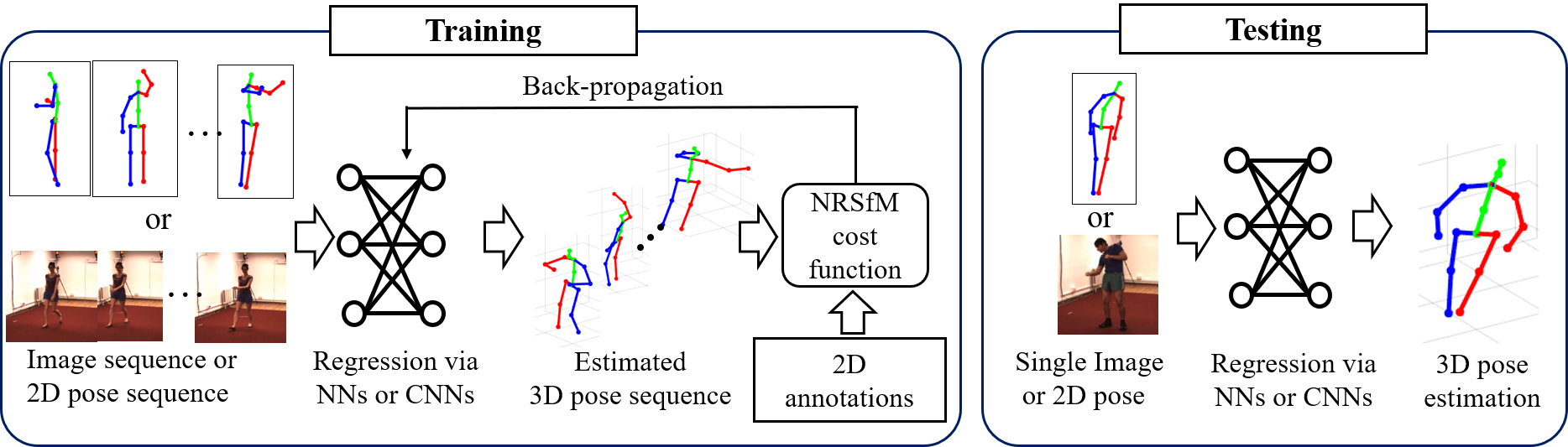}
    \hfill

\caption{Illustration of PRN. During the training, sequences of images or 2D poses are fed to the network, and their 3D shapes are estimated as the network outputs. The network is trained using the cost function which is based on an NRSfM algorithm. Testing is done by a simple feed-forward operation in a single-frame basis.}
\label{fig1}
\end{figure*}

Inferring 3D poses from several 2D observations is inherently an underconstrained problem. Especially, for non-rigid objects such as human faces or bodies, it is harder to retrieve the 3D shapes than for rigid objects due to their shape deformations.

There are two distinct ways to retrieve 3D shapes of non-rigid objects from 2D observations. The first approach is to use a 3D reconstruction algorithm. Non-rigid structure from motion (NRSfM) algorithms~\cite{akhter2011trajectory,bregler2000recovering,dai2014simple,gotardo2011non,lee2013procrustean} are designed to reconstruct 3D shapes of non-rigid objects from a sequence of 2D observations. Since NRSfM algorithms are not based on any learned models, the algorithms should be applied to each individual sequence, which makes the algorithm time-consuming when there are numerous number of sequences. The second approach is to learn the mappings from 2D to 3D with 3D ground truth training data. Prior knowledge can be obtained by dictionary learning~\cite{zhou20153d,Zhou_2016_CVPR}, but neural networks or convolutional neural networks (CNNs) are the most-used methods to learn the 2D-to-3D or image-to-3D mappings~\cite{martinez2017simple,pavlakos2017coarse}, recently. However, 3D ground truth data are essential to learn those mappings, which requires large amounts of costs and efforts compared to the 2D data acquisition.

There is another possibility: With the framework which combines those two different frameworks, \ie, NRSfM and neural networks, it is possible to overcome the limitations and to take advantages of both. There have been a couple of works that implement NRSfM using deep neural networks~\cite{kong2019deep,cha2019unsupervised}, but these methods mostly focus on the structure-from-category (SfC) problem, in which the 3D shapes of different rigid subjects in a category are reconstructed, and the deformation between subjects are not very diverse. Experiments on the CMU MoCap data in \cite{kong2019deep} show that, for data with diverse deformations, its generalization performance is not very good. Recently, Novotny \etal~\cite{novotny2019c3dpo} proposed a neural network that reconstructs 3D shapes from monocular images by canonicalizing 3D shapes so that the 3D rigid motion is registered. This method has shown successful reconstruction results for data with more diverse deformations, which has been used in traditional NRSfM research. Wang \etal~\cite{wang2019distill} also proposed knowledge distillation method that incorporate NRSfM algorithms as a teacher, which showed promising results on learning 3D human poses from 2D points.

The main difficulty of NRSfM is that one has to estimate both the rigid motion and the non-rigid shape deformation, which has been discussed extensively in the field of NRSfM throughout the past two decades. Especially, motion and deformation can get mixed up and some parts of rigid motions can be mistaken to be deformations. This has been first pointed out in \cite{lee2013procrustean}, in which conditions derived from the generalized Procrustes analysis (GPA) has been adopted to resolve the problem. Meanwhile, all recent neural-network-based NRSfM approaches attempt to regress both the rigid motion and non-rigid deformation at the same time. Among these, only Novotny \etal~\cite{novotny2019c3dpo} deals with the motion-deformation-separation problem in NRSfM, which is addressed as ``shape transversality.'' Their solution is to register motions of different frames using an auxiliary neural network.


In this paper, we propose an alternative to this problem: First, we prove that a set of Procrustes-aligned shapes is transversal. Based on this fact, rather than explicitly estimating rigid motions, we propose a novel loss, in which suitable motions are determined automatically based on Procrustes alignment. This is achieved by modifying the cost function recently proposed in Procrustean regression (PR)~\cite{PE_TIP}, an NRSfM scheme that shares similar motivations with our work, which is used to train neural networks via back-propagation. Thanks to this new loss function, the network can concentrate only on the 3D shape estimation, and accordingly, the underlying structure of the proposed neural network is quite simple. The proposed framework, \textit{Procrustean Regression Network} (PRN), learns to infer 3D structures of deformable objects using only 2D ground truths as training data.

Figure~\ref{fig1} illustrates the flow of the proposed framework. PRN accepts a set of image sequences or 2D point sequences as inputs at the training phase. The cost function of PRN is formulated to minimize the reprojection error and the nuclear norm of aligned shapes.
The whole training procedure is done in an end-to-end manner, and the reconstruction result for an individual image is generated at the test phase via a simple forward propagation without requiring any post processing step for 3D reconstruction. Unlike the conventional NRSfM algorithms, PRN robustly estimates 3D structure of unseen test data with feed-forward operations in the test phase, taking the advantage of neural networks. The experimental results verify that PRN effectively reconstructs the 3D shapes of non-rigid objects such as human faces and bodies.



\section{Related Works}

The underlying assumption of NRSfM methods is that the 3D shape or the 3D trajectory of a point is interpreted as a weighted sum of several bases~\cite{bregler2000recovering,akhter2011trajectory}. 3D shapes are obtained by factorizing a shape matrix or a trajectory matrix so that the matrix has a pre-defined rank. Improvements have been made by several works which use probabilistic principal components analysis~\cite{torresani2008nonrigid}, metric constraints~\cite{paladini2009factorization}, course-to-fine reconstruction algorithm~\cite{bartoli2008coarse}, complementary-space modeling~\cite{gotardo2011non}, block sparse dictionary learning~\cite{kong2016prior}, or force-based models~\cite{agudo2018force}. The major disadvantage of early NRSfM methods is that the number of basis should be determined explicitly while the optimal number of bases is usually unknown and is different from sequence to sequence. NRSfM methods using low-rank optimization have been proposed to overcome this problem~\cite{dai2014simple,garg2013dense}.

It was proven that shape alignment also helps to increase the performance of NRSfM~\cite{lee2013procrustean,lee2014procrustean,lee2016pami,cho2016complex}. Procrustean normal distribution (PND)~\cite{lee2013procrustean} is a powerful framework to separate rigid shape variations from the non-rigid ones. 
The expectation-maximization-based optimization algorithm applied to PND, EM-PND, showed superior performance to other NRSfM algorithms. Based on this idea, Procrustean Regression (PR)~\cite{PE_TIP} has been proposed to optimize an NRSfM cost function via a simple gradient descent method. In~\cite{PE_TIP}, the cost function consists of a data term and a regularization term where low-rankness is imposed not directly on the reconstructed 3D shapes but on the aligned shapes with respect to the reference shape. Any type of differentiable function can be applied for both terms, which has allowed its applicability to perspective NRSfM. 




On the other hand, along with recent rise of deep learning, there have been efforts to solve 3D reconstruction problems using CNNs. Object reconstruction from a single image with CNNs is an active field of research. The densely reconstructed shapes are often represented as 3D voxels or depth maps. While some works use ground truth 3D shapes~\cite{Tatarchenko2016,choy20163d,wu2017marrnet}, other works enable the networks to learn 3D reconstruction from multiple 2D observations~\cite{yan2016perspective,tulsiani2017multi,gadelha20163d,zhang2017learning}. The networks used in aforementioned works include a transformation layer that estimates the viewpoint of observations and/or a reprojection layer to minimize the error between input images and projected images. However, they mostly restrict the class of objects to ones that are rigid and have small amounts of deformations within each class, such as chairs and tables.

The 3D interpreter network~\cite{Wu2016} took a similar approach to NRSfM methods in that it formulates 3D shapes as the weighted sum of base shapes, but it used synthetic 3D models for network training. Warpnet~\cite{Kanazawa_2016_CVPR} successfully reconstructs 3D shapes of non-rigid objects without supervision, but the results are only provided for birds datasets which have smaller deformations than human skeletons. Tulsiani \etal~\cite{tulsiani2017learning} provided a learning algorithm that automatically localize and reconstruct deformable 3D objects, and Kanazawa \etal~\cite{cmrKanazawa18} also infer 3D shapes as well as texture information from a single image. Although those methods output dense 3D meshes, the reconstruction is conducted on rigid objects or birds which do not contain large deformations. Our method provides a way to learn 3D structure of non-rigid objects that contain relatively large deformations and pose variations such as human skeletons or faces.

Training a neural network using the loss function based on NRSfM algorithms has been rarely studied. Kong and Lucey \cite{kong2019deep} proposed to interpret NRSfM as multi-layer sparse coding, and Cha \etal \cite{cha2019unsupervised} proposed to estimate multiple basis shapes and rotations from 2D observations based on a deep neural network. However, they mostly focused on solving SfC problems which have rather small deformations, and the generalization performance of Kong and Lucey \cite{kong2019deep} is not very good for unseen data with large deformations.  Recently, Novotny \etal~\cite{novotny2019c3dpo} proposed a network structure which factors object deformation and viewpoint changes. Even though many existing ideas in NRSfM are nicely implemented in \cite{novotny2019c3dpo}, this in turn makes the network structure quite complicated. Unlike \cite{novotny2019c3dpo}, the 3D shapes are aligned to the mean of aligned shapes in each minibatch in PRN, which enables the use of a simple network structure. Moreover, PRN does not need to set the number of basis shapes explicitly, because it is adjusted automatically in the low-rank loss.



\section{Method}
We briefly review PR~\cite{PE_TIP} in Section~\ref{sec:PR}, which is a regression problem based on Procrustes-aligned shapes and is the basis of PRN. Here, we also introduce the concept of ``shape transversality'' proposed by Novotny \etal~\cite{novotny2019c3dpo} and prove that a set of Procrustes-aligned shapes is transversal, which means that Procrustes alignment can determine unique motions and eliminate the rigid motion components from reconstructed shapes. The cost function of PRN and its derivatives are explained in Section~\ref{sec:PRN}. The data term and the regularization term for PRN are proposed in Section~\ref{sec:cost}. Lastly, network structures and training strategy is described in Section~\ref{sec:detail}.

\subsection{Procrustean Regression}
\label{sec:PR}

NRSfM aims to recover 3D positions of the deformable objects from 2D correspondences. Concretely, given 2D observations of $n_p$ points $\mathbf{U}_i (1 \leq i \leq n_f)$ in $n_f$ frames, NRSfM reconstructs 3D shapes of each frame $\mathbf{X}_i$. PR~\cite{PE_TIP} formulated NRSfM as a regression problem. The cost function of PR consists of data term that corresponds to the reprojection error and the regularization term that minimizes the rank of the aligned 3D shapes, which has the following form:
\begin{equation}\label{eq1}
  \mathcal{J} = \sum_{i=1}^{n_{f}}f(\mathbf{X_\mathnormal{i}}) + \lambda g(\mathbf{\widetilde{X}},\overline{\mathbf{X}}).
\end{equation}
Here, $\mathbf{X}_{i}$ is a $3\times n_p$ matrix of the reconstructed 3D shapes on the $i$th frame, and $\overline{\mathbf{X}}$ is a reference shape for Procrustes alignment. $\mathbf{\widetilde{X}}$ is a $3n_p \times n_f$ matrix which is defined as $\widetilde{\mathbf{X}} \triangleq [\mathrm{vec}({\widetilde{\mathbf{X}}_1}) \, \mathrm{vec}({\widetilde{\mathbf{X}}_2}) \, \cdots \, \mathrm{vec}({\widetilde{\mathbf{X}}_{n_f}})]$, where $\mathrm{vec}(\cdot)$ is a vectorization operator. $\widetilde{\mathbf{X}}_i$ is an aligned shape of the $i$th frame. The aligned shapes are retrieved via Procrustes analysis without scale alignment. In other words, the aligning rotation matrix for each frame is calculated as
 \begin{equation}\label{eq2}
   \mathbf{R}_i = \argmin_{\mathbf{R}}{\lVert \mathbf{R} \mathbf{X}_i \mathbf{T} - 
   \overline{\mathbf{X}} \rVert} \quad \mathrm{s.t.} \quad \mathbf{R}^\emph{T}\mathbf{R} = \mathbf{I}.
 \end{equation}
Here, $\mathbf{T} \triangleq \mathbf{I}_{n_p}-\frac{1}{n_p}\mathbf{1}_{n_p}\mathbf{1}_{n_p}^\emph{T}$ is the translation matrix that makes the shape centered at origin. $\mathbf{I}_{n}$ is an $n \times n$ identity matrix, and $\mathbf{1}_{n}$ is an all-one vector of size $n$. The aligned shape of the $i$th frame becomes $\tilde{\mathbf{X}}_i = \mathbf{R}_i \mathbf{X}_i \mathbf{T}$.

In~\cite{PE_TIP}, (\ref{eq1}) is optimized for variables $\mathbf{X}_{i}$ and $\overline{\mathbf{X}}$ and it is shown that their gradients for (\ref{eq1}) can be analytically derived. Hence, any gradient-based optimization method can be applied for large choices of $f$ and $g$. What the above formulation implies is that we can impose a regularization loss based on the alignment of reconstructed shapes, and therefore, we can enforce certain properties only to non-rigid deformations in which rigid motions are excluded.

To back up the above claim, we introduce the transversal property introduced in \cite{novotny2019c3dpo}:
\begin{defn}
The set $\mathcal{X}_0 \subset \mathbb{R}^{3 \times n_p}$ has the transversal property if, for any pair $\mathbf{X}, \mathbf{X}' \in \mathcal{X}_0$ related by a rotation $\mathbf{X}' = \mathbf{R} \mathbf{X}$, then $\mathbf{X} = \mathbf{X}'$.
\end{defn}
The above definition basically defines a set of shapes that do not contain any non-trivial rigid transforms of its elements, and its elements can be interpreted as having canonical rigid poses. In other words, if two shapes in the set are distinctive, then they should not be identical up to a rigid transform. Here, we prove that the set of Procrustes-aligned shapes is indeed a transversal set. First, we need an assumption: Each shape should have a unique Procrustes alignment \wrt the reference shape. This condition might not be satisfied in some cases, \eg, degenerate shapes such as co-linear shapes.
\begin{lem}
A set $\mathcal{X}_P$ of Procrustes-aligned shapes \wrt a reference shape $\overline{\mathbf{X}}$ is transversal if the shapes are not degenerate.
\end{lem}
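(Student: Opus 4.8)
The plan is to exploit the defining optimality of Procrustes alignment: every element of $\mathcal{X}_P$ is already optimally rotated toward $\overline{\mathbf{X}}$, so the identity is its own aligning rotation. First I would record this self-alignment property. If $\tilde{\mathbf{X}} = \mathbf{R}_X \mathbf{X}\mathbf{T}$ with $\mathbf{R}_X$ the Procrustes rotation from (\ref{eq2}), then for any rotation $\mathbf{R}$ the substitution $\mathbf{S} = \mathbf{R}\mathbf{R}_X$ ranges over all rotations, so minimizing $\lVert \mathbf{R}\tilde{\mathbf{X}} - \overline{\mathbf{X}} \rVert$ over $\mathbf{R}$ is the same as minimizing $\lVert \mathbf{S}\mathbf{X}\mathbf{T} - \overline{\mathbf{X}} \rVert$ over $\mathbf{S}$, whose minimizer is $\mathbf{R}_X$. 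Hence the unique minimizer in $\mathbf{R}$ is $\mathbf{I}$; in words, $\mathbf{I} = \argmin_{\mathbf{R}} \lVert \mathbf{R}\tilde{\mathbf{X}} - \overline{\mathbf{X}} \rVert$ for every $\tilde{\mathbf{X}} \in \mathcal{X}_P$.

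Next I would carry out the transversality argument itself. Take $\tilde{\mathbf{X}}, \tilde{\mathbf{X}}' \in \mathcal{X}_P$ with $\tilde{\mathbf{X}}' = \mathbf{R}\tilde{\mathbf{X}}$ for some rotation $\mathbf{R}$. Applying the self-alignment property to $\tilde{\mathbf{X}}'$ gives $\mathbf{I} = \argmin_{\mathbf{Q}} \lVert \mathbf{Q}\tilde{\mathbf{X}}' - \overline{\mathbf{X}} \rVert = \argmin_{\mathbf{Q}} \lVert \mathbf{Q}\mathbf{R}\tilde{\mathbf{X}} - \overline{\mathbf{X}} \rVert$; reparametrising by $\mathbf{P} = \mathbf{Q}\mathbf{R}$ shows the optimal alignment of $\tilde{\mathbf{X}}$ is $\mathbf{P} = \mathbf{R}$. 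But the self-alignment property applied to $\tilde{\mathbf{X}}$ says the optimal alignment of $\tilde{\mathbf{X}}$ is $\mathbf{I}$. Equating the two and invoking uniqueness of the Procrustes minimizer yields $\mathbf{R} = \mathbf{I}$, hence $\tilde{\mathbf{X}}' = \tilde{\mathbf{X}}$, which is exactly the transversal property.

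The crux --- and the only place where the non-degeneracy hypothesis is needed --- is the uniqueness of the minimizer of (\ref{eq2}). I would establish this from the closed-form Procrustes solution: writing the centered cross-covariance $\overline{\mathbf{X}}(\mathbf{X}\mathbf{T})^T = \mathbf{U}\mathbf{\Sigma}\mathbf{V}^T$, the optimal rotation is $\mathbf{U}\mathbf{V}^T$ (with the usual sign correction to exclude reflections), and this is the unique global minimizer precisely when the cross-covariance has full rank with the singular structure that pins down $\mathbf{U}$ and $\mathbf{V}$. Degenerate configurations such as co-linear points make this matrix rank-deficient, leaving a continuum of optimal rotations about the collapsed axis and breaking uniqueness; ruling these out is exactly the content of the assumption. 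I expect this uniqueness step to be the main obstacle, since the preceding reparametrisation arguments are purely formal once uniqueness is in hand.
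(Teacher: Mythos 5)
Your proof is correct and follows essentially the same route as the paper's: both arguments rest on the observation that every element of $\mathcal{X}_P$ has the identity as its (unique) aligning rotation, then use the reparametrisation $\mathbf{Q} \mapsto \mathbf{Q}\mathbf{R}$ to conclude that the optimal alignment of $\mathbf{X}$ would have to be both $\mathbf{R}$ and $\mathbf{I}$, forcing $\mathbf{R} = \mathbf{I}$. The only difference is cosmetic: the paper simply \emph{defines} non-degeneracy as uniqueness of the Procrustes minimizer, whereas you additionally sketch why degeneracy (e.g.\ co-linear shapes, rank-deficient cross-covariance) is what breaks that uniqueness via the SVD closed form.
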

\begin{proof}
Suppose that there are $\mathbf{X}, \mathbf{X}' \in \mathcal{X}_P$ that satisfy $\mathbf{X}' = \mathbf{R} \mathbf{X}$. Based on the assumption, $\min_{\mathbf{R}'} \| \mathbf{R}' \mathbf{X}' \mathbf{T} - \overline{\mathbf{X}} \|^2$ will have a unique minimum at $\mathbf{R}' = \mathbf{I}$. Hence, $\min_{\mathbf{R}'} \| \mathbf{R}' \mathbf{R} \mathbf{X} \mathbf{T} - \overline{\mathbf{X}} \|^2$ will also have a unique minimum at the same point, which indicates that $\min_{\mathbf{R}''} \| \mathbf{R}'' \mathbf{X} \mathbf{T} - \overline{\mathbf{X}} \|^2$ will have one at $\mathbf{R}'' = \mathbf{R}$. Based on the assumption, $\mathbf{R}''$ has to be $\mathbf{I}$, and hence $\mathbf{R}=\mathbf{I}$.
\end{proof}
In \cite{novotny2019c3dpo}, an arbitrary registration function $f$ is introduced to ensure the transversality of a given set, which is implemented as an auxiliary neural network that has to be trained together with the main network component. We can interpret the Procrustes alignment in this work as a replacement of $f$ that does not need training and has analytic gradients. Accordingly, the underlying network structure of PRN can become much simpler at the cost of a more complicated loss function.


\subsection{PR Loss for Neural Networks}
\label{sec:PRN}

One may directly use the gradients of (\ref{eq1}) to train neural networks by designing a neural network that estimates both the 3D shapes $\mathbf{X}_{i}$ and the reference shape $\overline{\mathbf{X}}$.
However, the reference shape here incurs some problems when we are to handle it in a neural network. If the class of objects that we are interested in does not contain large deformations, then imposing this reference shape as a global parameter can be an option. On the contrary, if there can be a large deformation, then optimizing the cost function with minibatches of similar shapes or sequences of shapes can be vital for the success of training. In this case, a separate network module to estimate a good 3D reference shape is inevitable. However, designing a network module that estimates mean shapes may make the network structure more complex and training procedure harder.
To keep it concise, we excluded the reference shape from (\ref{eq1}) and defined the reference shape as the mean of the aligned output 3D shapes. The mean shape $\overline{\mathbf{X}}$ in (\ref{eq1}) is simply replaced with $\sum_{j=1}^{n_f} \mathbf{R}_j \mathbf{X}_j \mathbf{T}$. Now, $\mathbf{X}_i$ is the only variable in the cost function, and the derivative of the cost function with respect to the estimated 3D shapes, $\frac{\partial {\mathcal{J}}}{\partial{\mathbf{X}_i}}$, is derived analytically.

The cost function of PRN can be written as follows:
\begin{equation}\label{eq3}
  \mathcal{J} = \sum_{i=1}^{n_{f}}f(\mathbf{X_\mathnormal{i}}) + \lambda g(\mathbf{\widetilde{X}}).
\end{equation}
The alignment constraint is also changed to
 \begin{equation}\label{eq4}
   \mathbf{R} = \argmin_{\mathbf{R}}{ \sum_{i=1}^{n_f} \lVert \mathbf{R}_i \mathbf{X}_i \mathbf{T} - \frac{1}{n_f}
   \sum_{j=1}^{n_f} \mathbf{R}_j \mathbf{X}_j \mathbf{T} \rVert}  \quad\quad \mathrm{s.t.} \quad \mathbf{R}_{i}^\emph{T}\mathbf{R}_{i} = \mathbf{I}.
 \end{equation}
where $\mathbf{R}$ is the concatenation of all rotation matrices, \ie, $\mathbf{R}=[\mathbf{R}_1, \mathbf{R}_2, \cdots ,\mathbf{R}_{n_f}]$. Let us define $\mathbf{X}$ and $\widetilde{\mathbf{X}}$ as ${\mathbf{X}} \triangleq [\mathrm{vec}(\mathbf{X}_{1}), \mathrm{vec}(\mathbf{X}_{2}), \cdots, \mathrm{vec}(\mathbf{X}_{n_f})]$ and $\widetilde{\mathbf{X}} \triangleq [\mathrm{vec}(\widetilde{\mathbf{X}}_{1}),$ $\mathrm{vec}(\widetilde{\mathbf{X}}_{2}), \cdots, \mathrm{vec}(\widetilde{\mathbf{X}}_{n_f})]$ respectively. The gradient of $\mathcal{J}$ with respect to $\mathbf{X}$ while satisfying the constraint (\ref{eq4}) is
\begin{equation}\label{eq5}
\frac{\partial {\mathcal{J}}}{\partial{\mathbf{X}}} = \frac{\partial {f}}{\partial{\mathbf{X}}} + \lambda \left\langle \frac{\partial {g}}{\partial{\widetilde{\mathbf{X}}}}, \frac{\partial {\widetilde{\mathbf{X}}}}{\partial{\mathbf{X}}} \right\rangle,
\end{equation}
where $\left\langle \cdot , \cdot \right\rangle$ denotes the inner product. $\frac{\partial {f}}{\partial{\mathbf{X}}}$ and $\frac{\partial {g}} {\partial{\widetilde{\mathbf{X}}}}$ are derived once $f$ and $g$ are determined. The derivation process of $\frac{\partial {\widetilde{\mathbf{X}}}}{\partial{\mathbf{X}}}$ is analogous to~\cite{PE_TIP}. We explained detailed process in the supplementary material and provide only the results here, which has the form of
\begin{equation}\label{eq6}
\frac{\partial {\widetilde{\mathbf{X}}}}{\partial{\mathbf{X}}} = (\mathbf{A}\mathbf{B}^{-1}\mathbf{C}+\mathbf{I}_{3 n_p n_f})\mathbf{D}.
\end{equation}
$\mathbf{A}$ is a $3 n_p n_f \times 3 n_f$ block diagonal matrix expressed as
\begin{equation}\label{eq7}
\mathbf{A} = \text{blkdiag}((\mathbf{X}_{1}^{\prime T} \otimes \mathbf{I}_{3})\mathbf{L}, (\mathbf{X}_{2}^{\prime T} \otimes \mathbf{I}_{3})\mathbf{L}, \cdots, (\mathbf{X}_{n_f}^{\prime T} \otimes \mathbf{I}_{3})\mathbf{L}),
\end{equation}
where $\text{blkdiag}(\cdot)$ is the block-diagonal operator, $\otimes$ denotes the Kronecker product. $\mathbf{X}_{i}^{\prime T} = \mathbf{\hat{R}}_i \mathbf{X}_i \mathbf{T}$, where $\mathbf{\hat{R}}_i$ is the current rotation matrix before the gradient evaluation, and $\mathbf{L}$ is a $9 \times 3$ matrix that implies the orthogonality constraint of a rotation matrix~\cite{PE_TIP}, whose values are
\begin{equation}\label{eq8}
  \mathbf{L} =
    \begin{bmatrix}
0 & 0 & 0 & 0 & 0 & -1 & 0 & 1 & 0 \\
0 & 0 & 1 & 0 & 0 & 0 & -1 & 0 & 0 \\
0 & -1 & 0 & 1 & 0 & 0 & 0 & 0 & 0 \\
\end{bmatrix}^\emph{T}.
\end{equation}
$\mathbf{B}$ is a $3n_f \times 3n_f$ matrix whose block elements are
\begin{equation}\label{eq9}
        \mathbf{b}_{ij}=
        \begin{cases}
            \mathbf{L}^\emph{T} ( \sum_{k \neq i} \mathbf{X}_{k}^{\prime T}\mathbf{X}_{i}^{\prime T} \otimes \mathbf{I}_{3} ) \mathbf{L} & i=j \\
            \mathbf{L}^\emph{T} ( \mathbf{I}_{3} \otimes \mathbf{X}_{i}^{\prime} \mathbf{X}_{j}^{\prime T} ) \mathbf{EL} & i \neq j
        \end{cases}
\end{equation}
where $\mathbf{b}_{ij}$ means the $(i, j)$-th $3 \times 3$ submatrix of $\mathbf{B}$, $i$ and $j$ are integers ranging from $1$ to $n_f$, and $\mathbf{E}$ is a permutation matrix that satisfies $\mathbf{E}\mathrm{vec}(\mathbf{H}) = \mathrm{vec}(\mathbf{H}^{T})$.
$\mathbf{C}$ is a $3n_f \times 3 n_f n_p$ matrix whose block elements are
\begin{equation}\label{eq10}
        \mathbf{c}_{ij}=
        \begin{cases}
            - \mathbf{L}^\emph{T} ( \sum_{k \neq i} \mathbf{X}_{k}^{\prime} \otimes \mathbf{I}_{3} ) & i=j \\
            - \mathbf{L}^\emph{T} ( \mathbf{I}_{3} \otimes \mathbf{X}_{i}^{\prime} ) \mathbf{E} & i \neq j
        \end{cases}
\end{equation}
where $\mathbf{c}_{ij}$ means the $(i, j)$-th $3 \times 3$ submatrix of $\mathbf{C}$. Finally, $\mathbf{D}$ is a $3 n_f n_p \times 3 n_f n_p$ block-diagonal matrix expressed as
\begin{equation}\label{eq11}
\mathbf{D} = \text{blkdiag}(\mathbf{T} \otimes \mathbf{\hat{R}}_1, \mathbf{T} \otimes \mathbf{\hat{R}}_2, \cdots, \mathbf{T} \otimes \mathbf{\hat{R}}_{n_f}).
\end{equation}

Even though the size of $\partial {\widetilde{\mathbf{X}}}/\partial{\mathbf{X}}$ is quite large, \ie, $3 n_f n_p \times 3 n_f n_p$, we don't actually have to construct it explicitly since the only thing we need is the ability to backpropagate. Memory space and computations can be largely saved based on clever utilization of batch matrix multiplications and reshapes. In the next section, we will discuss about the design of the functions $f$ and $g$ and their derivatives.


\subsection{Design of $f$ and $g$}
\label{sec:cost}

In PRN, the network produces the 3D position of each joint of a human body. The network output is fed into the cost function, and the gradients are calculated to update the network. For the data term $f$, we use the reprojection error between the estimated 3D shapes and the ground truth 2D points. We only consider the orthographic projection in this paper, but the framework can be easily extended to the perspective projection. The function $f$ corresponding to the data term has the following form.
\begin{equation}\label{eq12}
  f(\mathbf{X}) = \sum_{i=1}^{n_f} \frac{1}{2}{\lVert (\mathbf{U}_i - \mathbf{P}_{o} \mathbf{X}_i)\odot \mathbf{W}_i \rVert}_{F}^{2}.
\end{equation}
Here, $\mathbf{P}_{o} = \big[\begin{smallmatrix}
  1 & 0 & 0\\
  0 & 1 & 0
\end{smallmatrix}\big]$ is an $2 \times 3$ orthographic projection matrix, and $\mathbf{U}_i$ is a $2\times n_p$ 2D observation matrix (ground truth). $\mathbf{W}_i$ is a $2 \times n_p$ weight matrix whose $i$th column represents the confidence of the position of $i$th point. $\mathbf{W}_i$ has values between 0 and 1, where 0 means the keypoint is not observable due to occlusion. Scores from 2D keypoint detectors can be used as values of $\mathbf{W}_i$. Lastly, ${\lVert \cdot \rVert}_{F}$ and $\odot$ denotes the Frobenius norm and element-wise multiplication respectively. The gradient of (\ref{eq12}) is
\begin{equation}\label{eq13}
  \frac{\partial f}{\partial \mathbf{X}} = \sum_{i=1}^{n_f} \mathbf{P}_{o}^T ((\mathbf{P}_{o}\mathbf{X}_i - \mathbf{U}_i)\odot \mathbf{W}_i\odot \mathbf{W}_i).
\end{equation}
For the regularization term, we imposed a low-rank constraint to the aligned shapes. Log-determinant or the nuclear norm are two widely used functions and we choose the nuclear norm, \ie,
\begin{equation}\label{eq14}
  g(\mathbf{\widetilde{X}}) = {\lVert \mathbf{\widetilde{X}} \rVert}_{*},
\end{equation}
where ${\lVert \cdot \rVert}_{*}$ stands for the nuclear norm of a matrix. The subgradient of a nuclear norm can be calculated as
\begin{equation}\label{eq15}
  \frac{\partial g}{\partial \mathbf{\widetilde{X}}} = \mathbf{U}\mathrm{sign}(\mathbf{\Sigma})\mathbf{V}^T,
\end{equation}
where $\mathbf{U} \mathbf{\Sigma} \mathbf{V}^T$ is the singular value decomposition of $\mathbf{\widetilde{X}}$ and $\mathrm{sign}(\cdot)$ is the sign function. Note that the sign function is to deal with zero singular values. $\partial g/\partial \mathbf{\widetilde{X}}_i$ is easily obtained by reordering $\partial g/\partial \mathbf{\widetilde{X}}$.

\subsection{Network Structure}
\label{sec:detail}

\begin{figure*}[t]
    \centering
    \includegraphics[width=0.98\textwidth]{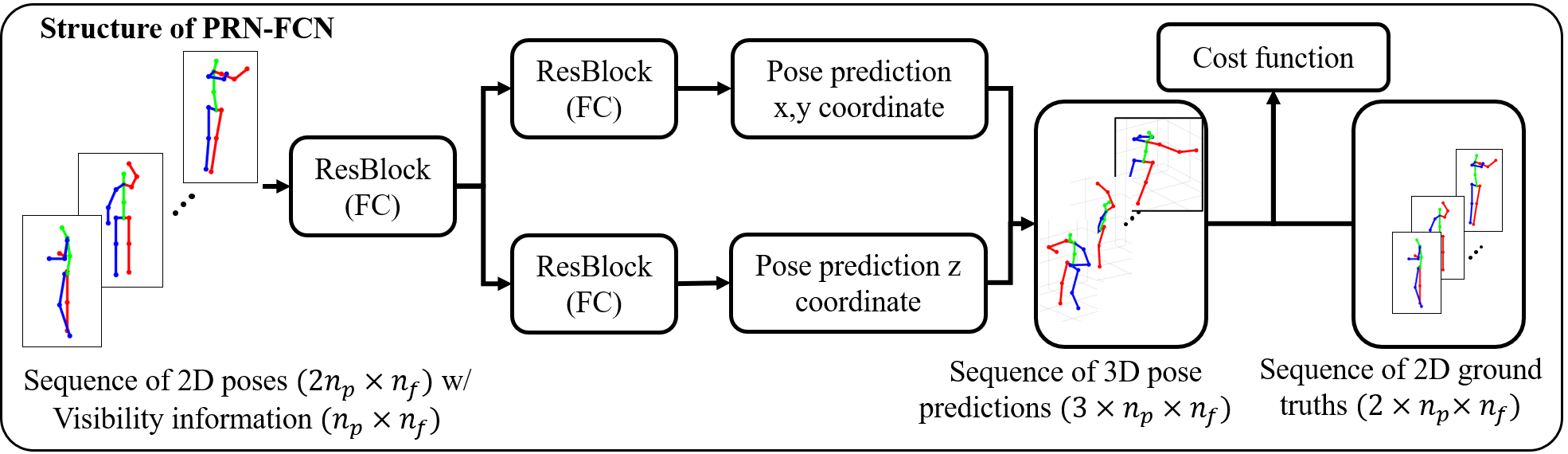}
\caption{Structure of FCNs used in this paper. Structure of CNNs are the same except that ResNet-50 is used as a backbone network instead of a fully connected ResBlock.}
\label{fig2}
\end{figure*}


By susbstituing (\ref{eq6}), (\ref{eq13}), and (\ref{eq15}) into (\ref{eq5}), the gradient of the cost function of PRN with respect to the 3D shape $\mathbf{X}_i$ can be calculated.
Then, the gradient for the entire parameters in the network can also be calculated by back-propagation. We experimented two different structures of PRN in Section~\ref{sec:exp}: fully connected networks (FCNs) and convolutional neural networks (CNNs). For the FCN structure, inputs are the 2D point sequences. Each minibatch has a size of $2 n_p \times n_f$, and the network produces the 3D positions of the input sequences. We use two stacks of residual modules~\cite{he2016deep} as the network structure. The prediction parts of x,y coordinates and z coordinates in the network are separated as illustrated in Figure~\ref{fig2}, which achieved better performance in our empirical experience. 

For the CNNs, sequences of RGB images are fed into the networks. ResNet-50~\cite{he2016deep} is used as a backbone network. The features of the final convolutional layers consisting of 2,048 feature maps of $7 \times 7$ size are connected to a network with the same structure as in the previous FCN to produce the final 3D output. We initialize the weights in the convolutional layers to those of the ImageNet~\cite{russakovsky2015imagenet} pre-trained network. More detailed hyperparameter settings are described in the supplementary material.


\section{Experiments}
\label{sec:exp}

The proposed framework is applied to reconstruct 3D human poses, 3D human faces, and dense 3D human meshes, all of which are the representative types of non-rigid objects. 
Additional qualitative results and experiments of PRN including comparison with the other methods on the datasets can be found in the supplementary materials.

\subsection{Datasets}
\textbf{Human 3.6M~\cite{ionescu2014human3}} contains large-scale action sequences with the ground truth of 3D human poses. We downsampled the frame rate of all sequences to 10 frames per second (fps). Following the previous works on the dataset, we used five subjects (S1, S5, S6, S7, S8) for training, and two subjects (S9, S11) are used as the test set. Both 2D points and RGB images are used for experiments. For the experiments with 2D points, we used ground truth projections provided in the dataset as well as the detection results of a stacked hourglass network~\cite{newell2016stacked}. 

\textbf{300VW~\cite{shen2015first}} has 114 video clips of faces with 68 landmarks annotations. We used the subset of 64 sequences from the dataset. The dataset is splitted into train and test sets, each of which consists of 32 sequences. 63,205 training images and 60,216 test images are used for the experiment. Since 300-VW dataset only provides 2D annotations and no 3D ground truth data exists, we used the data provided in~\cite{bulat2017far} as 3D ground truths.

\textbf{SURREAL~\cite{varol17_surreal}} dataset is used to validate our framework on dense 3D human shapes. It contains 3D human meshes which are created by fitting SMPL body model~\cite{SMPL} on CMU Mocap sequences. Each mesh is comprised of 6,890 vertices. We selected 25 sequences from the dataset and split into training and test sets which consist of 5,000 and 2,401 samples respectively. The meshes are randomly rotated around y-axis, and orthographic projection is applied to generate 2D points.

\subsection{Implementation details}
\label{sec:impl}
The parameter $\lambda$ is set to $\lambda=0.05$ for all experiments. The datasets used for our experiment consist of video sequences from fixed monocular cameras. However, most NRSfM algorithms including PRN requires moderate rotation variations in a sequence. To this end, for the Human 3.6M dataset where the sequences are taken by 4 different cameras, we alternately sample the frames or 2D poses from different cameras for consecutive frames. We set the time interval of the samples from different cameras to 0.5 seconds. Meanwhile, 300-VW dataset does not have multi-view sequences, and each sequence does not have enough rotations. Hence, we randomly sample the inputs in a minibatch from different sequences. For SURREAL datasets, we used 2D poses from consecutive frames.


On the other hand, the rotation alignment is applied to the samples within the same mini-batch. Therefore, if we select the samples in a mini-batch from a single sequence, the samples within the mini-batch does not have enough variations which affects training speed and performance. To alleviate this problem, we divided a mini-batch into 4 groups and calculated the gradients of the cost function for each group during the training of Human 3.6M and SURREAL datasets. In addition, since only a small number of different sequences are used in each mini-batch during the training and frames in the same mini-batch are highly correlated as a result, batch normalization~\cite{ioffe2015batch} may make the training unstable. Hence, we train the networks using batch normalization with moving average for the $70\%$ of the training, and the rest of the iterations are trained with fixed average values as in the test phase.



\subsection{Results}

\begin{table}[t]
\caption{MPJPE with 2D inputs on Human 3.6M dataset with different 2D inputs: (GT-ortho) Orthographic projection of 3D GT points. (GT-persp) Perspective projection of 3D GT. (SH)  2D keypoint detection results of a stacked hourglass network either from~\cite{novotny2019c3dpo} (SH~\cite{novotny2019c3dpo}) or from the network fine-tuned on Human 3.6M(SH-FT). PRN-FCN-W used weighted reprojection error based on the keypoint detection score.}
\centering
\label{tab1}         
\begin{adjustbox}{width=\linewidth}
\begin{tabular}{lcccccccccccccccc}
\hline
Method(GT-ortho) & Direct. & Discuss & Eating & Greet & Phone & Pose & Purch. & Sitting & SitingD & Smoke & Photo & Wait & Walk & WalkD & WalkT & Avg \\
\hline
PRN w/o reg & 138.1 & 139.7 & 146.5 & 145.2 & 140 & 127.6 & 149.4 & 170.4 & 188.4 & 138.3 & 150.9 & 133.1 & 125.6 & 143.9 & 139.3 & 144.8 \\

CSF2~\cite{gotardo2011non} + NN & 87.2 & 90.1 & 96.1 & 95.9 & 102.9 & 92.1 & 99.3 & 129.8 & 136.7 & 99.5 & 120.1 & 95.2 & 90.8 & 102.4 & 89.2 & 101.6 \\

SPM~\cite{dai2014simple} + NN & 65.3 &	68.7 &	82.0 &	70.1 &	95.3 &	65.1 &	71.9 &	117.0 & 136.0 &	84.3 &	88.9 &	71.2 &	59.5 &	73.3 &	68.3 &	82.3 \\

C3DPO~\cite{novotny2019c3dpo} & \bf 56.1	& \bf 55.6	& 62.2	& \bf 66.4	& 63.2	& \bf 62.0	& \bf 62.9	& \bf 76.3	& \bf 85.8	& \bf 59.9	& 88.7	& \bf 63.3	& 71.1	& 70.7	& 72.3	& 67.8 \\

PRN-FCN & 65.3 &	58.2 &	\bf 60.5 &	73.8 &	\bf 60.7 &	71.5 &	64.6 &	79.8 & 90.2 &	60.3 &	\bf 81.2 &	67.1 &	\bf 54.4 &	\bf 61.2 &	\bf 65.6 &	\bf 66.7 \\
\hline
Method(GT-persp) & Direct. & Discuss & Eating & Greet & Phone & Pose & Purch. & Sitting & SitingD & Smoke & Photo & Wait & Walk & WalkD & WalkT & Avg \\
\hline

C3DPO~\cite{novotny2019c3dpo} & 96.8 & 85.7 & 85.8 & 107.1 & 86.0 & 96.8 & 93.9 & 94.9 & 96.7 & 86.0 & 124.3 & 90.7 & 95.2 & 93.4 & 101.3 & 95.6  \\

PRN-FCN & \textbf{93.1}	& \textbf{83.3}	& \textbf{76.2}	& \textbf{98.6}	& \textbf{78.8}	& \textbf{91.7}	& \textbf{81.4}	& \textbf{87.4}	& \textbf{91.6}	& \textbf{78.2}	& \textbf{104.3}	& \textbf{89.6}	& \textbf{83.0}	& \textbf{80.5}	& \textbf{95.3}	& \textbf{86.4} \\

\hline
Method(SH) & Direct. & Discuss & Eating & Greet & Phone & Pose & Purch. & Sitting & SitingD & Smoke & Photo & Wait & Walk & WalkD & WalkT & Avg \\ \hline
C3DPO~\cite{novotny2019c3dpo} & 131.1 & 137.4 & 125.2 & 146.4 & 143.2 & 141.4 & 137.3 & 141.4 & 163.8 & 136.2 & 161 & 143.4 & 145.9 & 153.2 & 168.6 & 145.0\\
PRN-FCN (SH~\cite{novotny2019c3dpo}) & 127.2	& 115.1	& 109.2	& 130.0	& 126.9	& 122.3	& 116.4	& 128.4	& 149.3	& 117.3	& 140.7	& 124.0	& 123.9	& 115.3	& 140.4	& 124.5 \\

PRN-FCN (SH FT) & \textbf{100.2} & 89.4 & 83.8 & 105.5 & 93.0 & 97.2 & 89.2 & 114.0 & 141.2 & 89.1 & \textbf{114.8} & 97.3 & \textbf{91.0} & 88.3 & 107.2 & 99.1\\
PRN-FCN-W (SH FT) & 100.3 & \textbf{88.8} & \textbf{82.8} & \textbf{105.2} & \textbf{91.4} & \textbf{96.7} & \textbf{88.1} & \textbf{102.1} & \textbf{113.2} & \textbf{87.4 }& 115.1 & \textbf{96.5} & 91.7 & \textbf{87.6} & \textbf{106.4 }& \textbf{95.9 }\\
\hline
\end{tabular}
\end{adjustbox}
\end{table}

The performance of PRN on Human3.6M is evaluated in terms of mean per joint position error (MPJPE) which is the widely used metric in the literature. Meanwhile, we used normalized error as the error metric of 300-VW and SURREAL datasets since the dataset does not provide absolute scales of 3D points. MPJPE and normalized error(NE) are defined as
\begin{equation}\label{eq16}
  \mathrm{MPJPE}(\hat{\mathbf{X}}_i , \mathbf{X}_i^{*})  = \frac{1}{n_p}\sum_{j=1}^{n_p}\lVert \hat{\mathbf{X}}_{ij} - \mathbf{X}_{ij}^{*} \rVert , \quad
    \mathrm{NE}(\hat{\mathbf{X}}_i , \mathbf{X}_i^{*}) = \frac{\lVert \hat{\mathbf{X}}_i - \mathbf{X}_i^{*} \rVert_{F}} {\lVert \mathbf{X}_i^{*} \rVert_{F}},
\end{equation}
where $\hat{\mathbf{X}}_i$ and $\mathbf{X}_i^{*}$ denote the reconstructed 3D shape and the ground truth 3D shape on the $i$th frame, respectively, and $\hat{\mathbf{X}}_{ij}$ and $\mathbf{X}_{ij}^{*}$ are the $j$th keypoint of $\hat{\mathbf{X}}_i$ and $\mathbf{X}_i^{*}$, respectively. Since orthographic projection has reflection ambiguity, we measure the error also for the reflected shapes and choose the shape that has \nj{a smaller} error.

To verify the effectiveness of PRN in the fully-connected network architecture (PRN-FCN), we first applied PRN to the task of 3D reconstruction given an input of 2D points. First, we trained PRN-FCN on the Human 3.6M dataset using \nj{either} ground truth 2D generated by orthographic projection or perspective projection (GT-ortho, GT-persp) \nj{or} keypoints detected using Stacked hourglass networks (SH). The \nj{detailed results for different actions} are illustrated in Table~\ref{tab1}.
\nj{For comparison, we also show the results of C3DPO from \cite{novotny2019c3dpo} under the same training setting.} As a baseline, we also provide the performance of FCN trained only on the reprojection error (PRN w/o reg). We also trained the neural nets using the 3D shapes reconstructed from existing NRSfM methods, CSF2~\cite{gotardo2011non} and SPM~\cite{dai2014simple} to compare our framework with NRSfM methods. We applied the NRSfM methods to each sequence with \ml{the} same strides and camera settings as done in training PRN. The trained networks also have the same structure as the one used for PRN.

Here, we can confirm that the regularization term helps estimating depth information more accurately and drops the error significantly. Moreover, PRN-FCN significantly outperforms the NRSfM methods and is also superior to the recently proposed work~\cite{novotny2019c3dpo} for both ground truth inputs and inputs from keypoint detectors, which proves the effectiveness of the alignment and the low-rank assumption for similar shapes.. While PRN-FCN is silghtly better than~\cite{novotny2019c3dpo} under orthographic projections, it largely outperforms~\cite{novotny2019c3dpo} when trained using 2D points with persepective projections, which indicates that PRN is also robust to the noisy data. The results from the neural networks trained with NRSfM tend to have large variations depending on the types of sequences. This is mainly because the label data comes from NRSfM methods does not show prominent reconstruction results, and this erroneous signal limits the performance of the network in difficult sequences. On the other hand, PRN-FCN robustly reconstruct 3D shapes across all sequences. More interestingly, when the scores of keypoint detectors are used as a weight(PRN-FCN-W), PRN showed improved performance. This result implies that PRN is also robust to inputs with structured missing points since occluded keypoints have lower scores. Although we did not provide the confidence information as input signals, lower weight in the cost function makes the keypoints with lower confidence rely more on the regularization term. As a consequence, PRN-FCN-W performs especially better on the sequences that have complex pose variations such as \textit{Sitting} or \textit{SittingDown}.

\begin{figure}[t]
\centering

\begin{minipage}[r]{0.16\textwidth}
\centering
2D inputs
\end{minipage}
\begin{minipage}[r]{0.16\textwidth}
\centering
PRN-FCN
\end{minipage}
\begin{minipage}[r]{0.16\textwidth}
\centering
GT
\end{minipage}
\begin{minipage}[r]{0.16\textwidth}
\centering
2D inputs
\end{minipage}
\begin{minipage}[r]{0.16\textwidth}
\centering
PRN-FCN
\end{minipage}
\begin{minipage}[r]{0.16\textwidth}
\centering
GT
\end{minipage}

\begin{minipage}[r]{0.16\textwidth}
\centering
\includegraphics[width=0.6\textwidth]{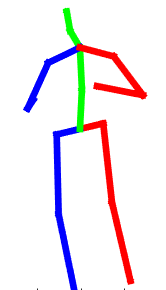}
\end{minipage}
\begin{minipage}[r]{0.16\textwidth}
\centering
\includegraphics[width=0.8\textwidth]{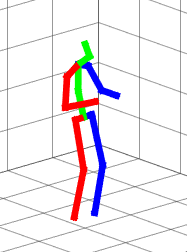}
\end{minipage}
\begin{minipage}[r]{0.16\textwidth}
\centering
\includegraphics[width=0.8\textwidth]{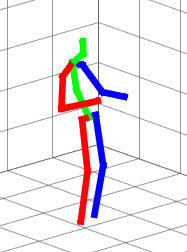}
\end{minipage}
\begin{minipage}[r]{0.16\textwidth}
\centering
\includegraphics[width=0.6\textwidth]{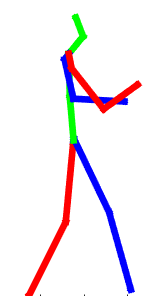}
\end{minipage}
\begin{minipage}[r]{0.16\textwidth}
\centering
\includegraphics[width=0.8\textwidth]{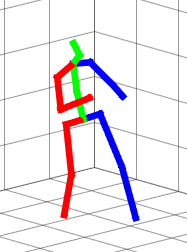}
\end{minipage}
\begin{minipage}[r]{0.16\textwidth}
\centering
\includegraphics[width=0.8\textwidth]{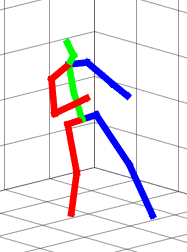}
\end{minipage}

\begin{minipage}[r]{0.16\textwidth}
\centering
\includegraphics[width=0.6\textwidth]{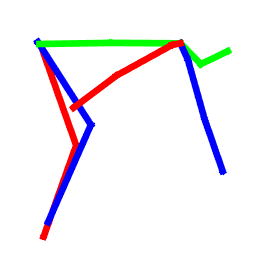}
\end{minipage}
\begin{minipage}[r]{0.16\textwidth}
\centering
\includegraphics[width=0.8\textwidth]{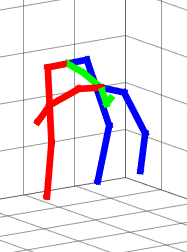}
\end{minipage}
\begin{minipage}[r]{0.16\textwidth}
\centering
\includegraphics[width=0.8\textwidth]{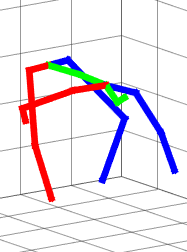}
\end{minipage}
\begin{minipage}[r]{0.16\textwidth}
\centering
\includegraphics[width=0.6\textwidth]{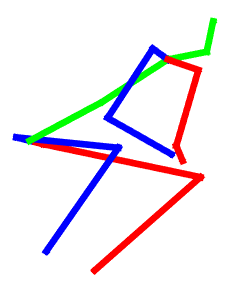}
\end{minipage}
\begin{minipage}[r]{0.16\textwidth}
\centering
\includegraphics[width=0.8\textwidth]{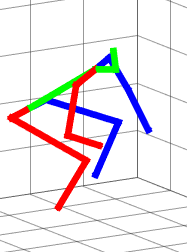}
\end{minipage}
\begin{minipage}[r]{0.16\textwidth}
\centering
\includegraphics[width=0.8\textwidth]{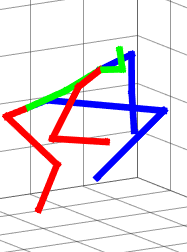}
\end{minipage}

  \caption{Qualitative results of PRN-FCN on Human 3.6M dataset. PRN successfully reconstructs 3D shapes from 2D points under various rotations and poses. Left arms and legs are shown in blue, and right arms and legs are shown in red.}
  \label{fig3}
\end{figure}

Qualitative results for PRN and comparison with the ground truth shapes are illustrated in Figure~\ref{fig3}. It is shown that PRN accurately reconstructs 3D shapes of human bodies from various challenging 2D poses.

\begin{table}[t]
\begin{minipage}{.35\linewidth}
    \centering

    \caption{MPJPE with RGB image inputs on the Human 3.6M dataset.}
    \label{tab2} 
    
    \medskip
\begin{tabular}{cc}
\hline
Method & \, MPJPE \,\\
\hline
PRN w/o reg & 164.5 
 \\
CSF2~\cite{gotardo2011non} + CNN & 130.6
 \\
SPM~\cite{dai2014simple} + CNN & 114.4
 \\
PRN-CNN & \bf 108.9
 \\
\hline
GT 3D & 98.8
\\
\hline
\end{tabular}
\end{minipage}\hfill
\begin{minipage}{.3\linewidth}
    \centering

    \caption{Normalized error with 2D inputs on the 300-VW dataset.}
    \label{tab3}

    \medskip

\begin{tabular}{cc}
\hline
Method & NE\\
\hline
PRN w/o reg & 0.5201
 \\
CSF2~\cite{gotardo2011non} + NN & 0.2751
 \\
PR~\cite{PE_TIP} + NN & 0.2730
 \\
C3DPO~\cite{novotny2019c3dpo} & 0.1715
 \\
PRN-FCN & \textbf{0.1512}
 \\
\hline
GT 3D & 0.0441
\\
\hline
\end{tabular}
\end{minipage}\hfill
\begin{minipage}{.3\linewidth}
    \centering

    \caption{Normalized error with 2D inputs on the SURREAL dataset.}
    \label{tab4}

    \medskip

\begin{tabular}{cc}
\hline
Method & NE\\
\hline
PRN w/o reg & 0.3565
 \\
C3DPO~\cite{novotny2019c3dpo} & 0.3509
 \\
PRN-FCN & \textbf{0.1377}
\\
\hline
\end{tabular}
\end{minipage}
\end{table}

Next, we apply PRN to the CNNs to learn the 3D shapes directly from RGB images. MPJPE on the Human 3.6M test set are provided in Table~\ref{tab2}. For comparison, we also trained the networks using only \ml{the} reprojection error \sh{and excluding the regularization term in the cost function of PRN (PRN w/o reg).}
Moreover, we also trained the networks using the 3D shapes reconstructed from existing NRSfM methods, CSF2~\cite{gotardo2011non} and PR~\cite{PE_TIP} since SPM~\cite{dai2014simple} diverged for many sequences in this dataset.  Estimating 3D poses from RGB images directly is more challenging than using 2D points as inputs because 2\nj{D} information as well as depth information should also be learned, and images also contain photometric variations or self-occlusions. PRN largely outperforms \sh{the model without regularization term} and shows better results than the CNNs trained using NRSfM reconstruction results. It can be observed that the CNN trained with ground truth 3D \ml{still} has large errors. The performance may be improved if recently\ml{-}proposed networks for 3D human pose estimation~\cite{mehta2017vnect,pavlakos2017coarse} is applied here. However, \ml{a} large network structure reduces \ml{the} batch size, which \ml{can ruin the entire training process of PRN.} 
Therefore, we \ml{instead used the largest network we can afford with maintaining the batch size to at least 16. Even though this limits the performance gain due to network structure, we can still compare the results from other similar-sized networks to verify that the proposed training strategy is effective.} 
Qualitative results of PRN-CNN are provided in the supplementary materials.

Next, for the task of 3D face reconstruction, we used \ml{the} 300-VW dataset~\cite{shen2015first} which has a video sequence of human faces. We used the reconstruction results from~\cite{bulat2017far} as 3D ground truths. The reconstruction performance is evaluated in terms of normalized error, and the results are illustrated in Table~\ref{tab3}. PRN-FCN is also superior to the other methods, including C3DPO~\cite{novotny2019c3dpo}, in 300-VW datasets. Qualitative results are shown in the two leftmost columns of Figure~\ref{fig4}. Both PRN and C3DPO output plausible results, but C3DPO tends to have larger depth ranges than ground truth depths, which led to increase the normalized errors.

Lastly, we validated the effectiveness of PRN on dense 3D models. Human meshes in SURREAL datasets consist of 6890 3D points for each shape. Since calculating the cost function on dense 3D data imposes heavy computational burden, we subdivided the 3D points into a few groups and compute the cost function for a small set of points. The groups are randomly organized in every iteration. Normalized errors on the SURREAL dataset is shown in Table~\ref{tab4}. As it can be seen in Table~\ref{tab4} and the two rightmost columns of Figure~\ref{fig4}, PRN-FCN effectively reconstruct 3D human mesh models from 2D inputs while C3DPO~\cite{novotny2019c3dpo} fails to recover depth information.

\begin{figure}[t]
\centering

\begin{minipage}[r]{0.2\textwidth}
\centering
 \qquad
\end{minipage}
\begin{minipage}[r]{0.35\textwidth}
\centering
300-VW dataset
\end{minipage}
\begin{minipage}[r]{0.4\textwidth}
\centering
SURREAL datset
\end{minipage}

\begin{minipage}[r]{0.18\textwidth}
\centering
2D inputs
\end{minipage}
\begin{minipage}[r]{0.18\textwidth}
\centering
\includegraphics[width=0.85\textwidth]{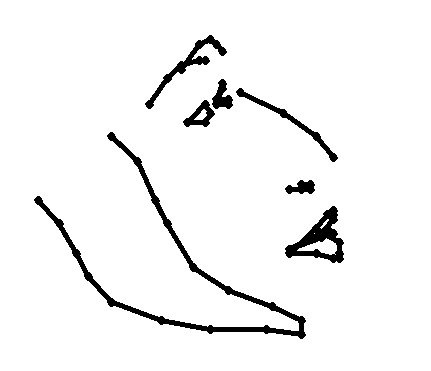}
\end{minipage}
\begin{minipage}[r]{0.18\textwidth}
\centering
\includegraphics[width=0.85\textwidth]{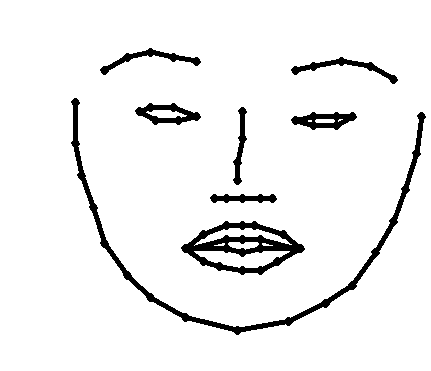}
\end{minipage}
\begin{minipage}[r]{0.18\textwidth}
\centering
\includegraphics[width=0.75\textwidth]{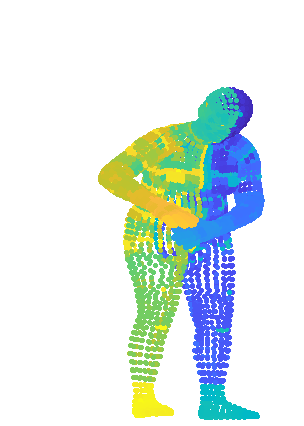}
\end{minipage}
\begin{minipage}[r]{0.18\textwidth}
\centering
\includegraphics[width=0.75\textwidth]{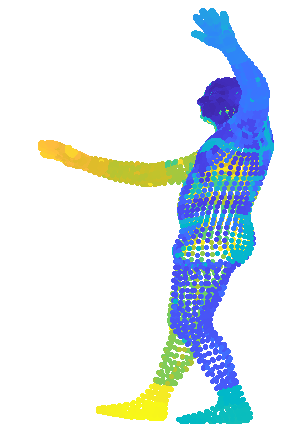}
\end{minipage}

\begin{minipage}[r]{0.18\textwidth}
\centering
C3DPO~\cite{novotny2019c3dpo}
\end{minipage}
\begin{minipage}[r]{0.18\textwidth}
\centering
\includegraphics[width=0.85\textwidth]{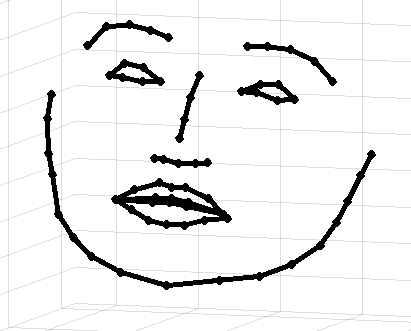}
\end{minipage}
\begin{minipage}[r]{0.18\textwidth}
\centering
\includegraphics[width=0.85\textwidth]{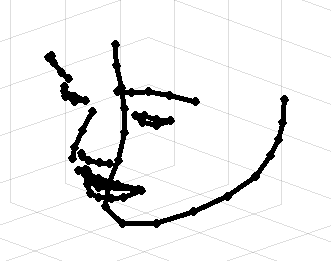}
\end{minipage}
\begin{minipage}[r]{0.18\textwidth}
\centering
\includegraphics[width=0.75\textwidth]{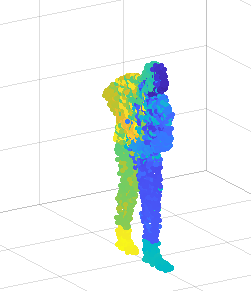}
\end{minipage}
\begin{minipage}[r]{0.18\textwidth}
\centering
\includegraphics[width=0.75\textwidth]{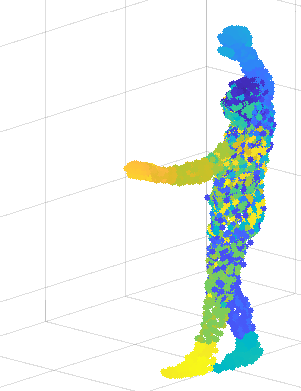}
\end{minipage}

\begin{minipage}[r]{0.18\textwidth}
\centering
PRN-FCN
\end{minipage}
\begin{minipage}[r]{0.18\textwidth}
\centering
\includegraphics[width=0.85\textwidth]{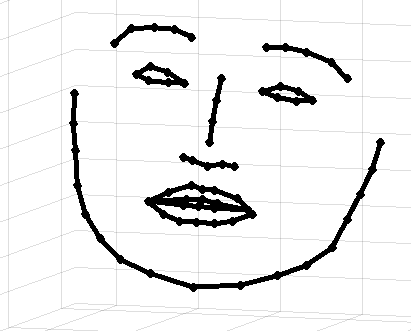}
\end{minipage}
\begin{minipage}[r]{0.18\textwidth}
\centering
\includegraphics[width=0.85\textwidth]{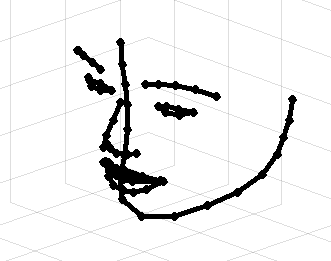}
\end{minipage}
\begin{minipage}[r]{0.18\textwidth}
\centering
\includegraphics[width=0.75\textwidth]{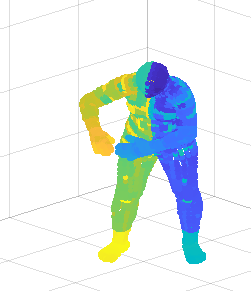}
\end{minipage}
\begin{minipage}[r]{0.18\textwidth}
\centering
\includegraphics[width=0.75\textwidth]{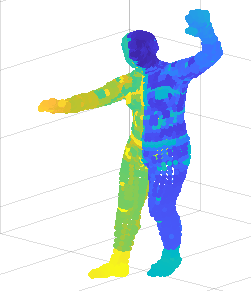}
\end{minipage}

\begin{minipage}[r]{0.18\textwidth}
\centering
GT
\end{minipage}
\begin{minipage}[r]{0.18\textwidth}
\centering
\includegraphics[width=0.85\textwidth]{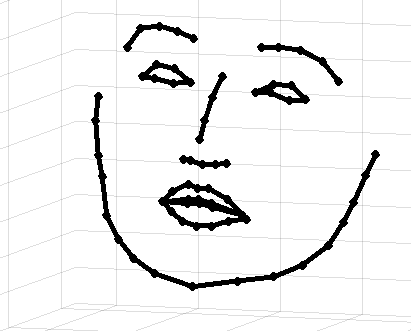}
\end{minipage}
\begin{minipage}[r]{0.18\textwidth}
\centering
\includegraphics[width=0.85\textwidth]{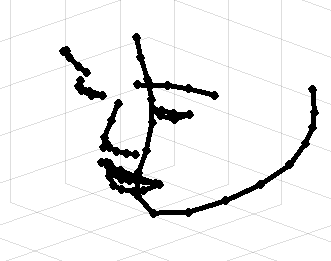}
\end{minipage}
\begin{minipage}[r]{0.18\textwidth}
\centering
\includegraphics[width=0.75\textwidth]{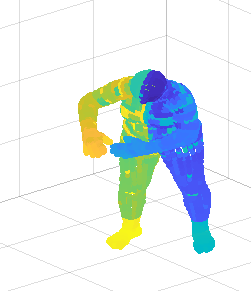}
\end{minipage}
\begin{minipage}[r]{0.18\textwidth}
\centering
\includegraphics[width=0.75\textwidth]{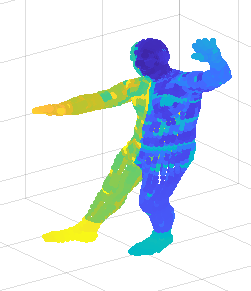}
\end{minipage}

  \caption{Qualitative results of PRN-FCN on 300-VW datasets (two leftmost columns) and SURREAL datasets (two rightmost columns).}
  \label{fig4}
\end{figure}

\section{Conclusion}

In this paper, a novel framework for training neural networks to estimate 3D shapes of non-rigid objects based on only 2D annotations \nj{is proposed}. 3D shapes of \nj{an image} can be rapidly estimated using the trained networks unlike \nj{existing} NRSfM algorithms. The performance of PRN can be improved by adopting different network architectures. For example, CNNs based on heatmap representations may provide accurate 2D poses and improve reconstruction performance. Moreover, the flexibility for designing \nj{the} data term and \nj{the} regularization term in PRN makes it easier to extend the framework to handle perspective projection. Nonetheless, the proposed PRN with simple network structures outperforms the existing state-of-the-art. Although \ml{solving NRSfM with deep learning still has some challenges}, we believe that the proposed framework establishes the connection between NRSfM algorithms and deep learning which will be useful for future research.

\section*{Acknowledgement}
This work was supported by grants from IITP (No.2019-0-01367, Babymind) and NRF Korea (2017M3C4A7077582, 2020R1C1C1012479), all of which are funded by the Korea government (MSIT).

\clearpage
%
%
\bibliographystyle{splncs04}
\bibliography{egbib}

\clearpage

\title{Procrustean Regression Networks: Learning 3D Structure of Non-Rigid Objects from 2D Annotations Supplementary Materials} 
\author{Sungheon Park\thanks{Authors contributed equally. $^{\dagger}$ Corresponding author.}\inst{1}\orcidID{0000-0002-7287-5661} \and
Minsik Lee\printfnsymbol{1}\inst{2}\orcidID{0000-0003-4941-4311} \and
Nojun Kwak$^{\dagger}$\inst{3}\orcidID{0000-0002-1792-0327}}

\authorrunning{S. Park et al.}
%
\institute{Samsung Advanced Institute of Technology (SAIT), Korea \email{sungheonpark@snu.ac.kr} \and
Hanyang University, Korea
\email{mleepaper@hanyang.ac.kr} \and
Seoul National University, Korea
\email{nojunk@snu.ac.kr}}
\maketitle

\section{Derivation of $\frac{\partial {\widetilde{\mathbf{X}}}}{\partial{\mathbf{X}}}$ in the Cost Function of PRN}

To make this material self-contained, we include the derivation of the back-propagation process of the proposed PRN, which is similar to that appeared in~\citeNew{PE_TIP_supp}. We start by rewriting the cost function of PRN:

\begin{equation}\label{eq_supp_1}
  \mathcal{J} = \sum_{i=1}^{n_{f}}f(\mathbf{X_\mathnormal{i}}) + \lambda g(\mathbf{\widetilde{X}}),
\end{equation}
while satisfying the alignment constraint
\begin{equation}\label{eq_supp_2}
   \mathbf{R} = \argmin_{\mathbf{R}}{ \sum_{i=1}^{n_f} \lVert \mathbf{R}_i \mathbf{X}_i \mathbf{T} -
   \frac{1}{n_f} \sum_{j=1}^{n_f} \mathbf{R}_j \mathbf{X}_j \mathbf{T} \rVert} \qquad \mathrm{s.t.} \quad \mathbf{R}_{i}^\emph{T}\mathbf{R}_{i} = \mathbf{I}_3.
\end{equation}
To integrate the alignment constraint (\ref{eq_supp_2}) with the cost function (\ref{eq_supp_1}), we introduce an orthogonal matrix $\mathbf{Q}_{i}$ that satisfies $\mathbf{R}_{i} = \mathbf{Q}_{i} \hat{\mathbf{R}}_{i}$ and assume $\mathbf{Q}_{i}=\mathbf{I}_3$  at the time of gradient evaluation without loss of generality. Then, $\widetilde{\mathbf{X}}_i = \mathbf{Q}_i \hat{\mathbf{R}}_i \mathbf{X}_i \mathbf{T} = \mathbf{Q}_i \mathbf{X}_i'$. Integrating the orthogonality constraint $\mathbf{Q}_{i}^{T} \mathbf{Q}_{i} = \mathbf{I}_3$ to (\ref{eq_supp_2}) by introducing Lagrange multipliers $\mathbf{\Lambda}_{i}$ yields the following equation:
\begin{equation}\label{eq_supp_3}
\sum_{i=1}^{n_f} \lVert \mathbf{Q}_{i} \mathbf{X}'_{i} - \frac{1}{n_f} \sum_{j=1}^{n_f} \mathbf{Q}_{j} \mathbf{X}'_{j}\rVert ^{2} +
 \frac{1}{2} \sum_{i=1}^{n_f} \left\langle \mathbf{\Lambda}_{i} , \mathbf{Q}_{i}^{T} \mathbf{Q}_{i} - \mathbf{I}_3 \right\rangle.
\end{equation}
Differentiating (\ref{eq_supp_3}) with respect to $\mathbf{Q}_{k} (1 \leq k \leq n_f)$ yields
\begin{equation}\label{eq_supp_4}
\begin{split}
(\mathbf{Q}_{k} \mathbf{X}'_{k} - \frac{1}{n_f} \sum_{j=1}^{n_f} \mathbf{Q}_{j} \mathbf{X}'_{j}) (\mathbf{X}_{k}'^{T} - \frac{1}{n_f}\mathbf{X}_{k}'^{T}) +
\sum_{i \neq k}(\mathbf{Q}_{i} \mathbf{X}'_{i} -  \frac{1}{n_f} \sum_{j=1}^{n_f} \mathbf{Q}_{j} \mathbf{X}'_{j})(-  \frac{1}{n_f} \mathbf{X}_{k}'^{T}) \\ + \mathbf{Q}_{k} \mathbf{\Lambda}_{k} = \mathbf{0}.
\end{split}
\end{equation}
By rearranging (\ref{eq_supp_4}) and multiplying $\mathbf{Q}_{k}^{T}$ on both sides, we get the following equation,
\begin{equation}\label{eq_supp_5}
\mathbf{Q}_{k} (\frac{n_f - 1}{n_f} \mathbf{X}'_{k} \mathbf{X}_{k}^{'T} + \mathbf{\Lambda}_{k}) \mathbf{Q}_{k}^{T}
= \frac{1}{n_f} \sum_{i \neq k} \mathbf{Q}_{i} \mathbf{X}'_{i} \mathbf{X}_{k}^{'T} \mathbf{Q}_{k}^{T}.
\end{equation}
On the left hand side, $\mathbf{\Lambda}_{k}$ is a symmetric matrix since the orthogonality constraint is symmetric (\ie, $\mathbf{Q}_{k}^{T} \mathbf{Q}_{k} = \mathbf{Q}_{k} \mathbf{Q}_{k}^{T} = \mathbf{I}_3$). Hence, the left hand side is symmetric, and so is the right hand side, \ie,
\begin{equation}\label{eq_supp_6}
    \sum_{i \neq k} \mathbf{Q}_{i} \mathbf{X}'_{i} \mathbf{X}_{k}^{'T} \mathbf{Q}_{k}^{T} = \sum_{i \neq k}  \mathbf{Q}_{k} \mathbf{X}'_{k} \mathbf{X}_{i}^{'T} \mathbf{Q}_{i}^{T} .
\end{equation}
By vectorizing (\ref{eq_supp_6}), we get
\begin{equation}\label{eq_supp_7}
\begin{split}
&\mathrm{vec}(\sum_{i \neq k}  \mathbf{Q}_{k} \mathbf{X}'_{k} \mathbf{X}_{i}^{'T} \mathbf{Q}_{i}^{T}) - \mathrm{vec}(\sum_{i \neq k} \mathbf{Q}_{i} \mathbf{X}'_{i} \mathbf{X}_{k}^{'T} \mathbf{Q}_{k}^{T}) \\
=&[(\sum_{i \neq k} \mathbf{Q}_{i} \mathbf{X}'_{i} \otimes \mathbf{I}_3)  - (\mathbf{I}_3 \otimes \sum_{i \neq k} \mathbf{Q}_{i} \mathbf{X}'_{i}) \mathbf{E}] \mathrm{vec}(\mathbf{Q}_{k} \mathbf{X}'_{k}) = \mathbf{0},
\end{split}
\end{equation}
where $\mathbf{E}$ is a permutation matrix that satisfies $\mathbf{E}\mathrm{vec}(\mathbf{H}) = \mathrm{vec}(\mathbf{H}^{T})$.
On the other hand, differentiating $\mathbf{Q}_{i}^{T} \mathbf{Q}_{i} = \mathbf{I}_3$ and evaluating at $\mathbf{Q}_{i} = \mathbf{I}_3$ gives
\begin{equation}\label{eq_supp_8}
  \partial \mathbf{Q}_i^\emph{T}\mathbf{Q}_i + \mathbf{Q}_i^\emph{T}\partial\mathbf{Q}_i = \partial \mathbf{Q}_i^\emph{T} + \partial\mathbf{Q}_i = 0.
\end{equation}
The above equation is a well-known relation about the derivative of an orthogonal matrix~\citeNew{hall2015lie}. Here, $\partial \mathbf{Q}_i$ is interpreted as an infinitesimal generator of rotation, which is a skew-symmetric matrix. Let us denote $\partial \mathbf{Q}_i$ as
\begin{equation}\label{eq_supp_9}
  \partial \mathbf{Q}_i =
  \begin{bmatrix}
0 & \partial q_{iz} & -\partial q_{iy} \\
-\partial q_{iz} & 0 & \partial q_{ix} \\
\partial q_{iy} & -\partial q_{ix} & 0 \\
\end{bmatrix},
\end{equation}
and if we define $\partial \mathbf{q}_i = [\partial q_{ix} \quad \partial q_{iy} \quad \partial q_{iz}]^\emph{T}$, then $\mathrm{vec}(\partial \mathbf{Q}_i) = \mathbf{L} \partial \mathbf{q}_i$.

Now, given an arbitrary $3 \times n_p$ matrix $\mathbf{S}$ and its column vectors $\mathbf{s}_1 , \mathbf{s}_2 , \cdots, \mathbf{s}_{n_p}$, one can easily verify that $(\mathbf{S}^\emph{T} \otimes \mathbf{I}_3)\mathbf{L} = \Big[ [\mathbf{s}_1]_\times^\emph{T} \quad [\mathbf{s}_2]_\times^\emph{T} \cdots [\mathbf{s}_{n_p}]_\times^\emph{T} \Big]^\emph{T}$ where $[\mathbf{s}]_\times$ is a skew-symmetric matrix that is related to the cross product of the vector. We can also obtain $[(\mathbf{S}^\emph{T} \otimes \mathbf{I}_3)\mathbf{L}]^{T}$ from $(\mathbf{S} \otimes \mathbf{I}_3)  - (\mathbf{I}_3 \otimes \mathbf{S}) \mathbf{E}$ by selecting 8th, 3rd, and 4th rows. The rest of the rows are either essentially identical to these rows or trivial. As a consequence, from (\ref{eq_supp_7}), we get
\begin{equation}\label{eq_supp_10}
  \mathbf{L}^{T} (\sum_{i \neq k} \mathbf{Q}_{i} \mathbf{X}'_{i} \otimes \mathbf{I}_3) \mathrm{vec}(\mathbf{Q}_{k} \mathbf{X}'_{k}) 
  = \mathbf{L}^{T} \mathrm{vec}(\mathbf{Q}_{k} \mathbf{X}'_{k} \sum_{i \neq k}  \mathbf{X}_{i}^{'T} \mathbf{Q}_{i}^{'T}) = \mathbf{0}.
\end{equation}
Differentiating (\ref{eq_supp_10}) yields
\begin{equation}\label{eq_supp_11}
    \begin{split}
        \mathbf{L}^{T} \mathrm{vec}(\partial \mathbf{Q}_k  \mathbf{X}'_{k} \sum_{i \neq k}  \mathbf{X}_{i}^{'T} \mathbf{Q}_{i}^{'T} + \partial \mathbf{X}'_{k} \sum_{i \neq k}  \mathbf{X}_{i}^{'T}
        + \mathbf{X}'_{k} \sum_{i \neq k} \partial \mathbf{X}_{i}^{'T}
        \\ + \mathbf{X}'_{k} \sum_{i \neq k}  \mathbf{X}_{i}^{'T} \partial \mathbf{Q}_{i}^{'T}
        )=\mathbf{0}.
    \end{split}
\end{equation}

By rearranging (\ref{eq_supp_11}) and substituting $\mathrm{vec}(\partial \mathbf{Q}_i) = \mathbf{L} \partial \mathbf{q}_i$ yields
\begin{equation}\label{eq_supp_12}
\begin{split}
&\mathbf{L}^{T} (\sum_{i \neq k} \mathbf{X}'_{i} \mathbf{X}_{k}^{'T} \otimes \mathbf{I}_3 ) \mathbf{L} \partial \mathbf{q}_k
+ \mathbf{L}^{T} \sum_{i \neq k} (\mathbf{I}_3 \otimes \mathbf{X}'_{k} \mathbf{X}_{i}^{'T})\mathbf{EL} \partial \mathbf{q}_i \\
&= - \mathbf{L}^{T}(\sum_{i \neq k} \mathbf{X}'_{i} \otimes \mathbf{I}_3) \mathrm{vec}(\partial \mathbf{X}'_{k})
- \mathbf{L}^{T} \sum_{i \neq k} (\mathbf{I}_3 \otimes \mathbf{X}'_{k}) \mathbf{E} \mathrm{vec}(\partial \mathbf{X}'_{i}).
\end{split}
\end{equation}
Since the index $k$ varies from $1$ to $n_f$, $n_f$ equations are made from (\ref{eq_supp_12}). Let $\partial \mathbf{q}$ be a vector $\partial \mathbf{q} = [\partial \mathbf{q}_1^{T}, \partial \mathbf{q}_2^{T}, \cdots, \partial \mathbf{q}_{n_f}^{T}]^{T}$, and similarly we define $\mathrm{vec} (\partial \mathbf{X}') = [\mathrm{vec} (\partial \mathbf{X}'_1)^{T}, \mathrm{vec} (\partial \mathbf{X}'_2)^{T}, \cdots, \mathrm{vec} (\partial \mathbf{X}'_{n_f})^{T}]^{T}$. To formulate $\partial \mathbf{q}$ as a function of $\mathrm{vec} (\partial \mathbf{X}')$, we enumerate $n_f$ equations and build a linear system that has the form of
\begin{equation}\label{eq_supp_13}
\mathbf{B} \partial \mathbf{q} = \mathbf{C} \mathrm{vec} (\partial \mathbf{X}').
\end{equation}
where $\mathbf{B}$ and $\mathbf{C}$ are the matrices explained in the main text.
Then, $\partial \mathbf{q}$ is expressed as
\begin{equation}\label{eq_supp_14}
\partial \mathbf{q} = \mathbf{B}^{-1} \mathbf{C} \mathrm{vec} (\partial \mathbf{X}').
\end{equation}

Next, differentiating $\mathbf{Q}_i \mathbf{X}_i' = \widetilde{\mathbf{X}}_i$ yields
\begin{equation}\label{eq_supp_15}
  \partial \mathbf{Q}_i \mathbf{X}_i' + \mathbf{Q}_i \partial \mathbf{X}_i' = \partial \widetilde{\mathbf{X}}_i.
\end{equation}
By vectorizing (\ref{eq_supp_15}), we get
\begin{equation}\label{eq_supp_16}
  ({\mathbf{X}_i'} ^\emph{T} \otimes \mathbf{I})\mathbf{L} \partial \mathbf{q}_i = \mathrm{vec}(\partial \widetilde{\mathbf{X}}_i - \partial \mathbf{X}_i').
\end{equation}
Let $\mathrm{vec} (\partial \widetilde{\mathbf{X}}) = [\mathrm{vec} (\partial \widetilde{\mathbf{X}}_1)^{T}, \mathrm{vec} (\partial \widetilde{\mathbf{X}}_2)^{T}, \cdots, \mathrm{vec} (\partial \widetilde{\mathbf{X}})_{n_f}^{T}]^{T}$, and similar to (\ref{eq_supp_13}), we build a linear system by varying the index $i$ from $1$ to $n_f$,
\begin{equation}\label{eq_supp_17}
\mathbf{A} \partial \mathbf{q} = \mathrm{vec}(\partial \widetilde{\mathbf{X}}) - \mathrm{vec}(\partial \mathbf{X}'),
\end{equation}
where $\mathbf{A}$ is also explained in the main text.

Substituting (\ref{eq_supp_14}) to (\ref{eq_supp_17}) yields
\begin{equation}\label{eq_supp_18}
(\mathbf{A} \mathbf{B}^{-1} \mathbf{C} +\mathbf{I}_{3 n_p n_f})  \mathrm{vec}(\partial \mathbf{X}') = \mathrm{vec}(\partial \widetilde{\mathbf{X}}).
\end{equation}
Finally, dividing both sides of (\ref{eq_supp_18}) by $\partial \mathrm{vec}(\mathbf{X})$ gives the derivative we need,
\begin{equation}\label{eq_supp_19}
\frac{\mathrm{vec} (\partial {\widetilde{\mathbf{X}}})}{\partial \mathrm{vec}(\mathbf{X})} = (\mathbf{A} \mathbf{B}^{-1} \mathbf{C}+\mathbf{I}_{3 n_p n_f})\mathbf{D},
\end{equation}
where $\mathbf{D}$ is a block-diagonal matrix explained in the main text. Note that $\mathbf{D}$ is based on the following derivative.
\begin{equation}\label{eq_supp_20}
\frac{\mathrm{vec}(\partial \mathbf{X}'_{i})}{\partial \mathrm{vec}(\mathbf{X}_i)} = \frac{ (\mathbf{T} \otimes \hat{\mathbf{R}_i}) \mathrm{vec}(\partial \mathbf{X}_{i})}{\partial \mathrm{vec}(\mathbf{X}_i)} = \mathbf{T} \otimes \hat{\mathbf{R}_i}.
\end{equation}

The derivative of the cost function is calculated as 
\begin{equation}\label{eq_supp_21}
\frac{\partial {\mathcal{J}}}{\partial{\mathrm{vec}(\mathbf{X})}} = \frac{\partial {f}}{\partial{\mathrm{vec}(\mathbf{X})}} + \lambda \left\langle \frac{\partial {g}}{\partial{\mathrm{vec}(\widetilde{\mathbf{X}})}}, \frac{\mathrm{vec} (\partial {\widetilde{\mathbf{X}}})}{\partial \mathrm{vec}(\mathbf{X})} \right\rangle,
\end{equation}
where the dimensions of $\frac{\partial {\mathcal{J}}}{\partial{\mathrm{vec}(\mathbf{X})}}$, $\frac{\partial {f}}{\partial{\mathrm{vec}(\mathbf{X})}}$, and $\frac{\partial {g}}{\partial{\mathrm{vec}(\widetilde{\mathbf{X}})}}$ are $1 \times 3n_p n_f$, and the dimension of $\frac{\mathrm{vec} (\partial {\widetilde{\mathbf{X}}})}{\partial \mathrm{vec}(\mathbf{X})}$ is $3n_p n_f \times 3n_p n_f$.

\section{Training Details}
For the Human 3.6M dataset, PRN-FCN with 2D ground truth inputs receives 17 keypoints for each frame while that with stacked hourglass network detection inputs receive 16 keypoints. Both networks produce 17 keypoints as an output which is trained with ground truth 2D joint positions. Iterations for PRN-FCN and PRN-CNN are 70,000 and 120,000, respectively. For CNN architectures, 3D shapes of human bodies are directly estimated from RGB images. Image frames in the dataset are cropped using ground truth bounding box information so that a person is centered in the cropped image, which are then resized to $256 \times 256$.

For the 300-VW dataset, both 2D inputs and 3D outputs have 68 keypoints. Since the dataset is smaller than Human 3.6M, iterations for FCN and CNN are set to 14,000 and 100,000, respectively. The RGB images are also cropped so that the mean position of the 2D landmarks becomes the center of the image.

We used Adam optimizer~\citeNew{kingma2014adam} with the start learning rate of $10^{-4}$. The learning rate is decayed by 0.8 for every 5,000 iterations. Number of hidden nodes in fully-connected ResBlock is set to 1,024 for Human 3.6M and 300-VW datasets and to 4,096 for SURREAL dataset respectively.

\section{Additional Experimental Results}

\subsection{Robustness to missing points on Human 3.6M dataset}
We measured the robustness of PRN-FCN when there exist missing points in both training and test datasets. We increased the ratio of missing points from 0\% to 20\% with 5\% intervals and trained PRN for each case on Human 3.6M dataset. The missing points are randomly selected and 2D inputs of missing points are set to 0. The MPJPE of PRN-FCN for all cases are shown in Table~\ref{tab1_supp}. It is verified that PRN is robust to missing points since the error only slightly increases as the ratio of missing point gets larger. 

\begin{table*}[t]
\caption{MPJPE with missing 2D inputs on Human 3.6M dataset.}
\centering
\label{tab1_supp}         
\begin{tabular}{lccccc}
\hline
Missing Points Ratio (\%) \qquad & 0 & 5 & 10 & 15 & 20\\
\hline
PRN-FCN & \,66.7\, & \,69.8\, & \,70.8\, & \,72.1\, & \,73.8\,
 \\
\hline
\end{tabular}
\end{table*}

\begin{figure}[t]
\centering
\begin{minipage}[r]{0.18\textwidth}
\centering
Image \\ inputs
\end{minipage}
\begin{minipage}[r]{0.18\textwidth}
\centering
\includegraphics[width=0.8\textwidth]{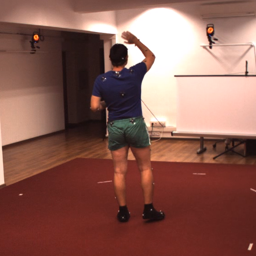}
\end{minipage}
\begin{minipage}[r]{0.18\textwidth}
\centering
\includegraphics[width=0.8\textwidth]{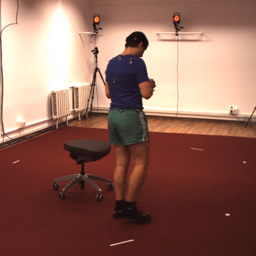}
\end{minipage}
\begin{minipage}[r]{0.18\textwidth}
\centering
\includegraphics[width=0.8\textwidth]{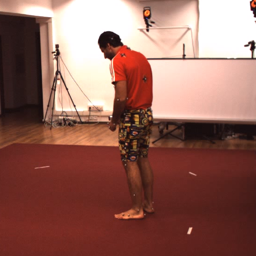}
\end{minipage}
\begin{minipage}[r]{0.18\textwidth}
\centering
\includegraphics[width=0.8\textwidth]{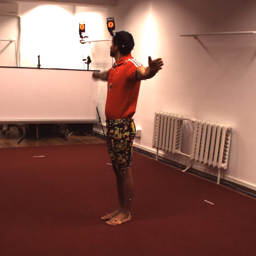}
\end{minipage}

\begin{minipage}[r]{0.18\textwidth}
\centering
PRN-CNN
\end{minipage}
\begin{minipage}[r]{0.18\textwidth}
\centering
\includegraphics[width=0.7\textwidth]{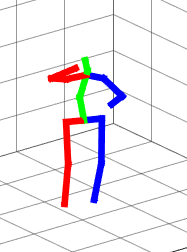}
\end{minipage}
\begin{minipage}[r]{0.18\textwidth}
\centering
\includegraphics[width=0.7\textwidth]{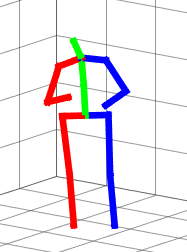}
\end{minipage}
\begin{minipage}[r]{0.18\textwidth}
\centering
\includegraphics[width=0.7\textwidth]{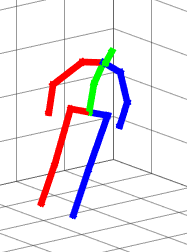}
\end{minipage}
\begin{minipage}[r]{0.18\textwidth}
\centering
\includegraphics[width=0.7\textwidth]{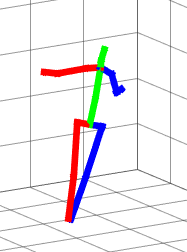}
\end{minipage}
\begin{minipage}[r]{0.18\textwidth}
\centering
GT
\end{minipage}
\begin{minipage}[r]{0.18\textwidth}
\centering
\includegraphics[width=0.7\textwidth]{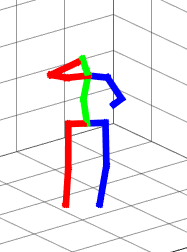}
\end{minipage}
\begin{minipage}[r]{0.18\textwidth}
\centering
\includegraphics[width=0.7\textwidth]{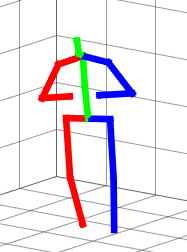}
\end{minipage}
\begin{minipage}[r]{0.18\textwidth}
\centering
\includegraphics[width=0.7\textwidth]{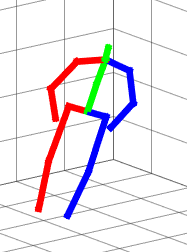}
\end{minipage}
\begin{minipage}[r]{0.18\textwidth}
\centering
\includegraphics[width=0.7\textwidth]{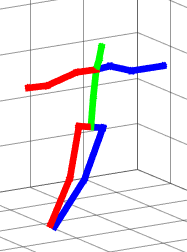}
\end{minipage}
  \caption{Qualitative results of PRN-CNN with RGB image inputs on Human 3.6M dataset. The last column shows a failure case.}
  \label{fig1_supp}
\end{figure}

\subsection{Qualitative results of PRN-CNN on Human 3.6M dataset}
Qualitative results of PRN-CNN are provided in Figure~\ref{fig1_supp} with input images and ground truths. PRN-CNN is able to estimate accurate 3D poses for various images including the case when \ml{a} part of body joints are self-occluded. The last column of Figure~\ref{fig1} shows a common failure case. PRN outputs poor estimation results when the input images are side-view\ml{s} of human bodies and the ground truth 3D pose\ml{s} have large depth \ml{variations}. For those cases, \nj{a} large depth change may only \nj{lead} to small changes in RGB images, so PRN suffers from distinguishing those subtle changes.

\begin{figure}[t]
\centering
\begin{minipage}[r]{0.18\textwidth}
\centering
Image inputs
\end{minipage}
\begin{minipage}[r]{0.18\textwidth}
\centering
\includegraphics[width=0.9\textwidth]{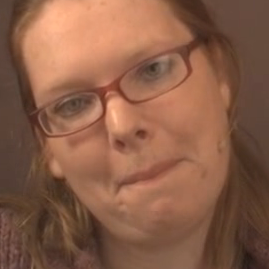}
\end{minipage}
\begin{minipage}[r]{0.18\textwidth}
\centering
\includegraphics[width=0.9\textwidth]{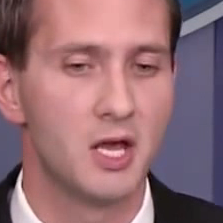}
\end{minipage}
\begin{minipage}[r]{0.18\textwidth}
\centering
\includegraphics[width=0.9\textwidth]{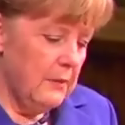}
\end{minipage}
\begin{minipage}[r]{0.18\textwidth}
\centering
\includegraphics[width=0.9\textwidth]{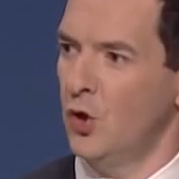}
\end{minipage}
\begin{minipage}[r]{0.18\textwidth}
\centering
PRN-CNN \\ (XY view)
\end{minipage}
\begin{minipage}[r]{0.18\textwidth}
\centering
\includegraphics[width=0.9\textwidth]{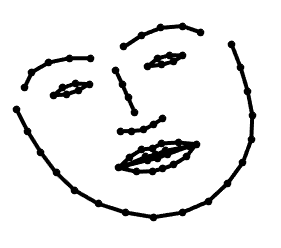}
\end{minipage}
\begin{minipage}[r]{0.18\textwidth}
\centering
\includegraphics[width=0.9\textwidth]{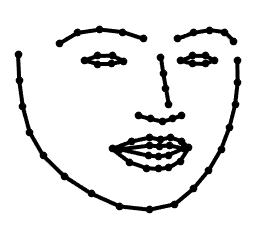}
\end{minipage}
\begin{minipage}[r]{0.18\textwidth}
\centering
\includegraphics[width=0.9\textwidth]{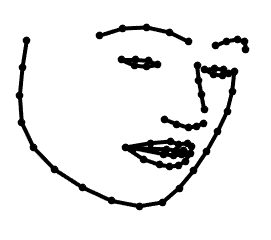}
\end{minipage}
\begin{minipage}[r]{0.18\textwidth}
\centering
\includegraphics[width=0.9\textwidth]{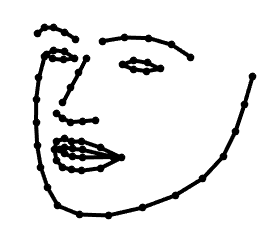}
\end{minipage}
\begin{minipage}[r]{0.18\textwidth}
\centering
PRN-CNN \\ (YZ view)
\end{minipage}
\begin{minipage}[r]{0.18\textwidth}
\centering
\includegraphics[width=0.9\textwidth]{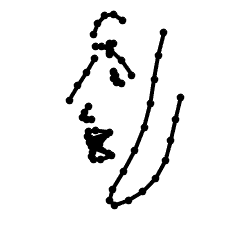}
\end{minipage}
\begin{minipage}[r]{0.18\textwidth}
\centering
\includegraphics[width=0.9\textwidth]{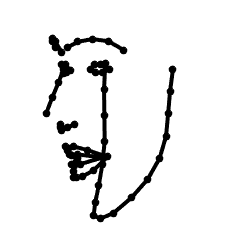}
\end{minipage}
\begin{minipage}[r]{0.18\textwidth}
\centering
\includegraphics[width=0.9\textwidth]{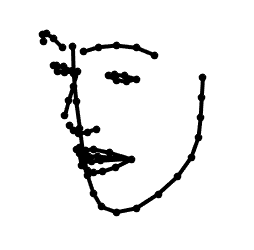}
\end{minipage}
\begin{minipage}[r]{0.18\textwidth}
\centering
\includegraphics[width=0.9\textwidth]{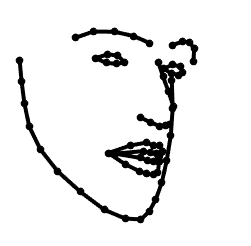}
\end{minipage}

  \caption{Qualitative results of PRN-CNN for 3D face shape estimation.}
  \label{fig2_supp}
\end{figure}

\subsection{Performance on 300-VW dataset with RGB inputs}
We also provided the quantitative results on 300-VW dataset trained with RGB image inputs. As illustrated in Table~\ref{tab2_supp}, PRN-CNN performs better than the compared CNN models which are trained using NRSfM results.

We have also shown a few reconstruction results of PRN-CNN on various test images in Figure~\ref{fig2}. The results from PRN are illustrated in views from \ml{the} XY-plane and \ml{the} YZ-plane. From \ml{the} XY-plane viewpoints, it is verified that \ml{the} 2D poses of faces are successfully learned in PRN, \sh{although it is hard to capture subtle variations of eyes or mouth. We can also verify from the YZ-plane viewpoints that the depth information is correctly estimated.}

\begin{table}[t]
\caption{Normalized error with RGB inputs on the 300-VW dataset.}
\centering
\label{tab2_supp}         
\begin{tabular}{cc}
\hline
Method & Normalized Error\\
\hline
CSF2~\citeNew{gotardo2011non_supp} + CNN & 0.4331
 \\
PR~\citeNew{PE_TIP_supp} + CNN & 0.4224
 \\
PRN-CNN & \textbf{0.3092}
 \\
\hline
\end{tabular}
\end{table}

\bibliographystyleNew{splncs04}
\bibliographyNew{egbib_supp}
\end{document}